\documentclass[conference]{IEEEtran}
\IEEEoverridecommandlockouts
\usepackage{cite}
\usepackage{amsmath,amssymb,amsfonts}
\usepackage{graphicx}
\usepackage{textcomp}
\usepackage{xcolor}
\usepackage{amsmath}
\usepackage{hyperref}
\usepackage{cleveref}
\usepackage{siunitx}
\usepackage{physics}
\usepackage{amsthm}
\usepackage{tikz}
\usepackage{algorithm}
\usepackage[noend]{algpseudocode}
\usepackage{subfig,graphicx}
\usepackage[font={footnotesize}]{caption}
\usepackage{float}
\usepackage{pifont}
\usepackage{booktabs}
\newcommand{\cmark}{\ding{51}}%
\newcommand{\xmark}{\ding{55}}%
\newcolumntype{P}[1]{>{\centering\arraybackslash}p{#1}}
\graphicspath{ {./img/} }
\def\BibTeX{{\rm B\kern-.05em{\sc i\kern-.025em b}\kern-.08em
    T\kern-.1667em\lower.7ex\hbox{E}\kern-.125emX}}
    
\newtheorem{theorem}{Theorem}[section]

\newtheorem{definition}{Definition}[section]

\crefname{figure}{Fig.}{Figs.}

\long\def\edit#1{\textcolor{black}{#1}}
    
\begin{document}

\title{Space-Time Conflict Spheres for Constrained Multi-Agent Motion Planning
\thanks{\textsuperscript{1} Anirudh Chari is with the Illinois Mathematics and Science Academy (\texttt{achari@imsa.edu})}
\thanks{\textsuperscript{2} Rui Chen and Changliu Liu are with the Robotics Institute, Carnegie Mellon University (\texttt{\{ruic3, cliu6\}@andrew.cmu.edu})}
}

\author{\IEEEauthorblockN{Anirudh Chari\textsuperscript{1}, Rui Chen\textsuperscript{2}, and Changliu Liu\textsuperscript{2}}
}

\maketitle

\begin{abstract}
Multi-agent motion planning (MAMP) is a critical challenge in applications such as connected autonomous vehicles and multi-robot systems. 
In this paper, we propose a space-time conflict resolution approach for MAMP. 
We formulate the problem using a novel, flexible sphere-based discretization for trajectories. 
Our approach leverages a depth-first conflict search strategy to provide the scalability of decoupled approaches while maintaining the computational guarantees of coupled approaches. 
We compose procedures for evading discretization error and adhering to kinematic constraints in generated solutions.
Theoretically, we prove the continuous-time feasibility and formulation-space completeness of our algorithm. 
Experimentally, we demonstrate that our algorithm matches the performance of the current state of the art with respect to both runtime and solution quality, while expanding upon the abilities of current work through accommodation for both static and dynamic obstacles.
We evaluate our algorithm in various unsignalized traffic intersection scenarios using CARLA, an open-source vehicle simulator.
Results show significant success rate improvement in spatially constrained settings, involving both connected and non-connected vehicles.
Furthermore, we maintain a reasonable suboptimality ratio that scales well among increasingly complex scenarios.
\end{abstract}


\section{Introduction}
Connected autonomous vehicles (CAVs) are self-driving vehicles that communicate with infrastructure and other vehicles. 
Vehicle-to-vehicle communication enables coordination among CAVs, which will greatly improve both the safety of road participants \cite{1,2} and the efficiency of traffic flow \cite{3}. 
We are particularly interested in the CAV coordination at traffic intersections \cite{4}, which are the site of a majority of road accidents due to human error \cite{5}.
The problem can be formulated as multi-agent motion planning (MAMP), which plans and coordinates trajectories among a group of agents such that each agent can travel from its start location to its goal without collisions with other agents or with the environment.
MAMP is also useful in surveillance, search-and-rescue, warehouse, and assembly robot groups. 
See \cite{6} for a thorough review.

MAMP is a generalization of the multi-agent path-finding problem (MAPF), where time is discretized into timesteps and agents move along the edges of a discrete graph. Finding an optimal solution to MAPF is NP-hard \cite{11}, hence optimal MAMP is also computationally intractable. 
There are generally two approaches to MAPF: coupled methods and decoupled methods. 
Coupled methods are often also referred to in literature as centralized, and decoupled methods are often referred to as decentralized or distributed.
Coupled methods search for solutions within a configuration space containing all agents, which enables guarantees of optimality and completeness. 
Despite recent advances in efficiency \cite{8,9,10}, coupled methods are unable to escape the exponential time complexity that comes with the high dimensionality of the configuration space.
On the other hand, decoupled methods consider agent paths individually before combining paths through conflict resolution strategies, which enables faster processing and better scalability, but with the drawback of difficulty in guaranteeing completeness and solution quality. 
In pursuit of computational tractability, we are motivated to further explore decoupled approaches.

Within decoupled MAMP approaches, there exist two primary conflict resolution strategies: temporal approaches and path prioritization. 
Temporal conflict resolution involves manipulating agent velocity profiles along respective paths, causing agents to pass through the conflict zone at different moments \cite{12,13,14}. 
In path prioritization, each agent is assigned a priority, and agents plan their paths sequentially in order of priority, with lower priority agents treating higher priority agents as dynamic obstacles \cite{18,19,20}.
Temporal methods are inherently suboptimal due to the omission of spatial trajectory manipulation. 
Path prioritization methods face an inevitable bottleneck due to the requirement of sequential processing.
\edit{Thus, we are motivated to address these drawbacks by pursuing a spatiotemporal conflict resolution strategy.}

Due to the applicability to CAVs and other domains, we also desire continuous-time feasibility and accommodation for dynamic obstacles as critical properties of a MAMP algorithm.
\edit{In the CAV domain, the former property ensures that generated solutions are safe during operation, and the latter property enables planning in commonplace scenarios involving pedestrians and non-connected vehicles.}
Recent work in decoupled MAPF has excelled in providing computational guarantees such as completeness \cite{21,24}. 
However, it is difficult to generalize MAPF solutions to MAMP problems, as discretization error may give rise to new conflict in continuous-time, and these approaches do not consider agent kinematic constraints.
Thus, we reach a dilemma: discretization of MAMP allows for easier formulation of robust and efficient algorithms, but at the expense of continuous-time feasibility, and consequently applicability to real-world settings.
On the other hand, attacking continuous MAMP directly risks computational intractability and complicates trajectory formulation, which may in turn restrict solution flexibility.
Some work attempts to adapt discrete solutions to the continuous problem \cite{26,35}, but no formal proof of feasibility in continuous time is present, and thus much of this work is unsuitable for real-world implementation.
\edit{While the authors of \cite{27} provide a proof of continuous-time feasibility for their discretization strategy, dynamic obstacles are ignored, which again hinders applicability.}
\edit{Current work in spatiotemporal conflict resolution \cite{mav} has neither of the aforementioned traits.}
To the best of the authors' knowledge, we are still missing literature regarding \textbf{decoupled, discretized MAMP algorithms with continuous-time feasibility and accommodation for dynamic obstacles}.

To fill this gap, we propose a novel decoupled approach to MAMP called the \underline{\textbf{S}}pace-\underline{\textbf{T}}ime \underline{\textbf{C}}onflict \underline{\textbf{S}}pheres (STCS) algorithm.
STCS utilizes a sphere-based trajectory discretization to manipulate paths both spatially and temporally during conflict resolution.
We theoretically prove STCS's continuous-time feasibility and formulation-space completeness.
We experimentally demonstrate that the algorithm exhibits comparable performance to the current state of the art while offering a greater range of problem settings and more consistency in finding solutions, namely within environments that are spatially constrained and contain dynamic obstacles.
The rest of the paper is organized as follows.
\Cref{probform} formulates the problem of MAMP. 
\Cref{stcs} discusses the STCS algorithm. 
\Cref{theoretical} discusses the theoretical properties of STCS. 
\Cref{results} presents experimental results in simulation. 
Finally, \Cref{conclusion} concludes our study and presents future directions.

\section{Problem Formulation}\label{probform}
We are given a set $\mathcal{A}$ containing $N$ \edit{communicating} agents of radius $r$ in a continuous-time, two-dimensional space, where each agent $\alpha_i \in \mathcal{A}$ is defined by its starting location $q^s_i$, goal location $q^g_i$, acceleration bound $a^{max}_i$, priority $\phi_i$, and path rigidity $\gamma_i$. 
We are also given a workspace $\mathcal{W}$ containing $M$ obstacles, where each static obstacle $o_i^s \in \mathcal{W}$ has a known location, and each dynamic obstacle $o_i^d \in \mathcal{W}$ has a known trajectory.
\edit{We assume noiseless agent intention communication and obstacle motion prediction.}

We use $\mathcal{P}$ to denote the set of agent and obstacle paths.
Each path $\pi_i \in \mathcal{P}$ can be represented spatiotemporally as a capsule, which we define as a curve inflated with radius $\lambda r$ for some $\lambda > 1$. 
The capsule representation is analogous to a chain of infinitely many spheres with radius $\lambda r$. 

We introduce a novel, flexible discretization of this representation to a finite chain of spheres, and introduce a requirement that adjacent spheres within a path must be no further than tangential to each other.
By \Cref{feasible-theorem}, this discrete representation maintains continuous-time feasibility (i.e. no discretization error) when $r$ is scaled by $\lambda^* = \frac{1}{\sqrt{3} - 1} \approx 1.366$, meaning paths can be interpolated safely for agent motion control.
For succinct representation, we can also introduce a requirement that alternating spheres within a path must be further than tangential to each other, ensuring that a path is always built from the minimum number of spheres.
\edit{In practice, we can further scale $r$ to increase robustness to error in communication and prediction.}

Using this representation, a path $\pi_i$ can be defined as a sequence of spatiotemporal spheres $\{s_{i,1}, ..., s_{i,n}\}$ as waypoints, where $s_{i,k}$ is the $k$-th waypoint in $\pi_i$.
A waypoint $s_{i,k}$ has components $(x_{i,k}, y_{i,k}, t_{i,k})$, where $(x_{i,k}, y_{i,k})$ is a physical location in the environment and $t_{i,k}$ is the time at which the location is occupied.
Each $s_{i,k}$ also has a corresponding velocity vector $\vec{\omega}_{i,k}$, which is computed based on conflict resolution conditions and kinematic constraints.

All paths in $\mathcal{P}$ can exist synchronously within a central \textit{space-time grid} (STG), which we define as a subspace of $\mathbb{R}^{3}$ with basis $\{\hat{x}, \hat{y}, \hat{t}\}$.
We assume that each agent $\alpha_i$ uses some single-agent motion planner to sequentially upload waypoints to its path $\pi_i$ in the STG.
In practice, either some centralized infrastructure can manage the STG while agents upload and query data, or each agent can maintain its own copy of the STG and send and receive broadcasts in a decentralized manner.
Among all paths in $\mathcal{P}$, an intersection between a pair of spheres belonging to distinct paths implies conflict, which must be resolved through the manipulation of paths in the STG.
We use $s_{i,k}^r$ and $s_{i,k}^*$ to denote the initial (reference) and final (optimal) locations of a sphere $s_{i,k}$, respectively.
Thus, we leverage our sphere-based discretization strategy to formulate conflict resolution as the optimization of $s_{i, k}^* \forall (i,k)$ as follows.\vspace{-10pt}
\begin{subequations}\label{eq:formulation}
\begin{align}
    \begin{split}
     \operatorname*{argmin}_{s_{i, k} \forall (i,k)} & \quad \displaystyle\sum_{i=1}^{n} \phi_i \norm{s_{i, k} - s_{i, k}^r}_2 \label{eq:obj}
    \end{split}
    \\
    \text{s.t.} & \quad \norm{s_{i,k} - s_{j,l}}_2 \geq 2r, \forall (s_{i,k}, s_{j,l}) \in \mathcal{P}, i \neq j \label{eq:con1} \\
                & \quad \norm{s_{i,k+1} - s_{i,k}}_2 \leq 2r, \forall (s_{i,k}, s_{i,k+1}) \in \mathcal{P} \label{eq:con2} \\
                & \quad t_{i,k+1} - t_{i,k} \leq (\delta t)_{i,k}, \forall (s_{i,k}, s_{i,k+1}) \in \mathcal{P} \label{eq:con3}
\end{align}
\end{subequations}
\Cref{eq:obj} minimizes the prioritized total displacement of spheres from their ``optimal'' original state in their respective path. 
\Cref{eq:con1} enforces that no pair of spheres from different paths can intersect. 
\Cref{eq:con2} enforces that consecutive spheres within a path must intersect. 
\Cref{eq:con3} enforces that time intervals between consecutive spheres within a path must be compatible with kinematic constraints; the computation of $(\delta t)_{i,k}$ is given in \labelcref{eq:deltat}.

Solving this optimization problem to resolve conflicts following the convergence of all agent paths to their respective goals would yield an optimal solution to MAMP. 
However, this leaves the task of solving a non-convex and potentially large optimization, which risks computational intractability. 
Instead, conflict resolution can be applied following each agent waypoint upload.
Employing this method as a heuristic, as we will observe, enables fast convergence to feasible solutions that are suboptimal within a reasonable bound.

\begin{figure*}[t]
    \centering
    \subfloat[Conflict detection]{\label{fig:ex_a}\includegraphics[width=.18\linewidth]{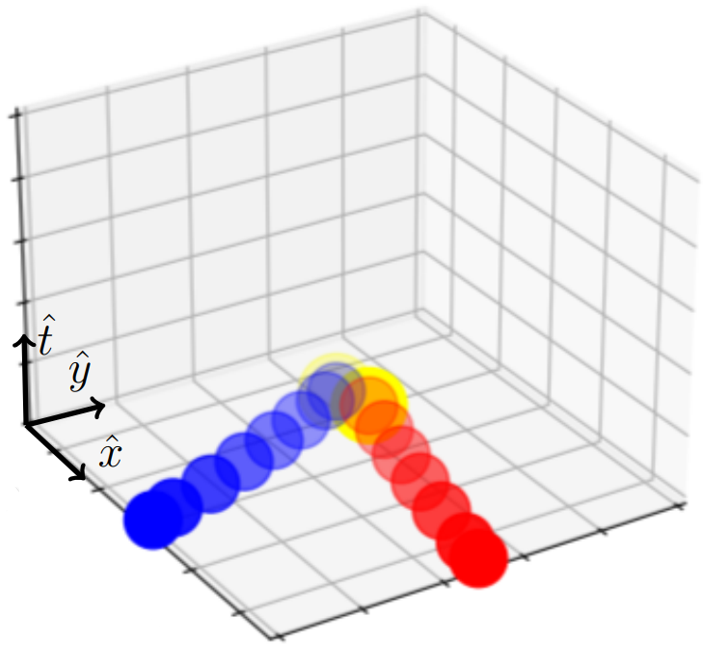}}
    \hspace{1em}
    \subfloat[DV Computation]{\label{fig:ex_b}\includegraphics[width=.18\linewidth] {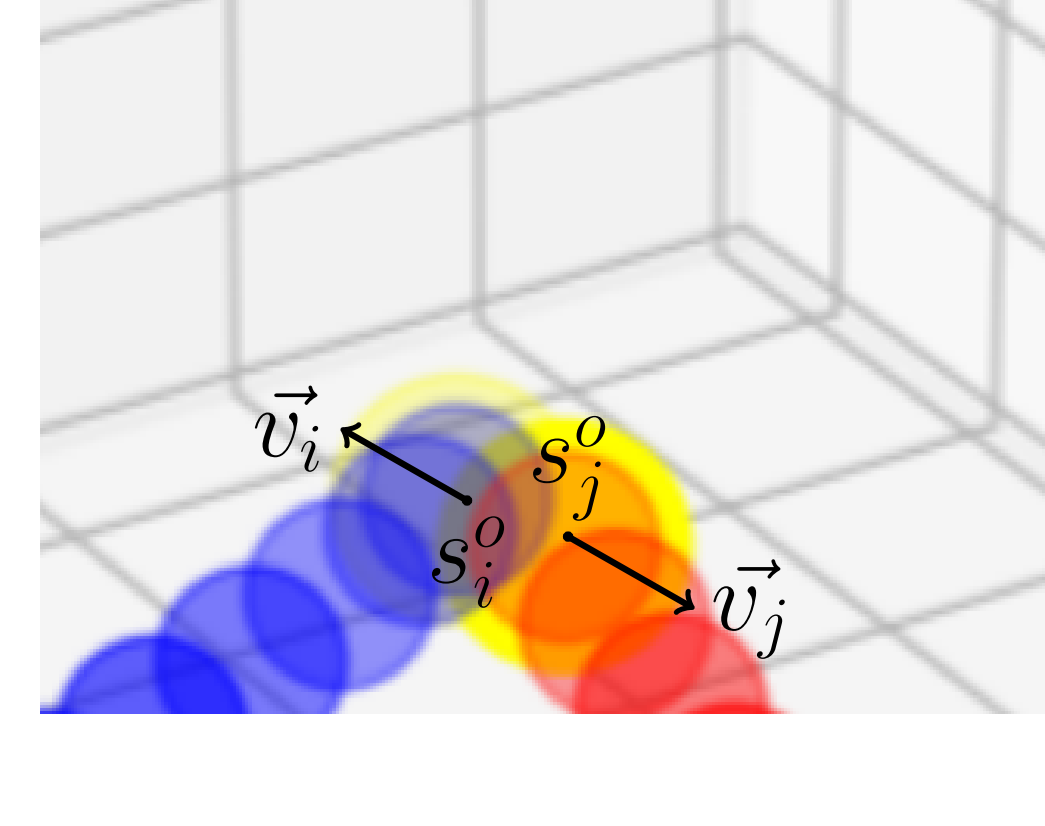}}
    \hspace{1em}
    \subfloat[Path Shift]
    {\label{fig:ex_c}\includegraphics[width=.2\linewidth]{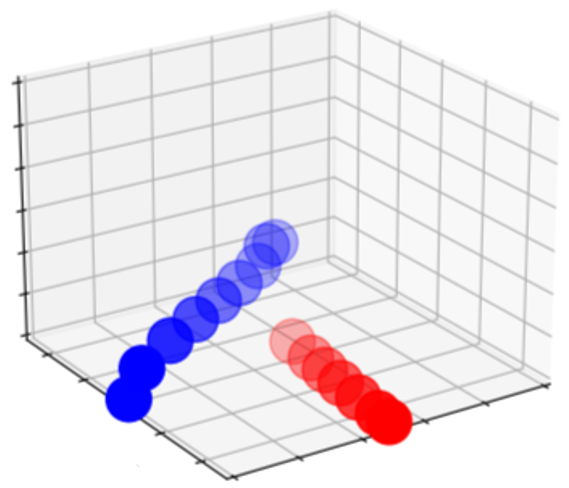}}
    \hspace{1em}
    \subfloat[Final solution] {\label{fig:ex_d}\includegraphics[width=.2\linewidth]{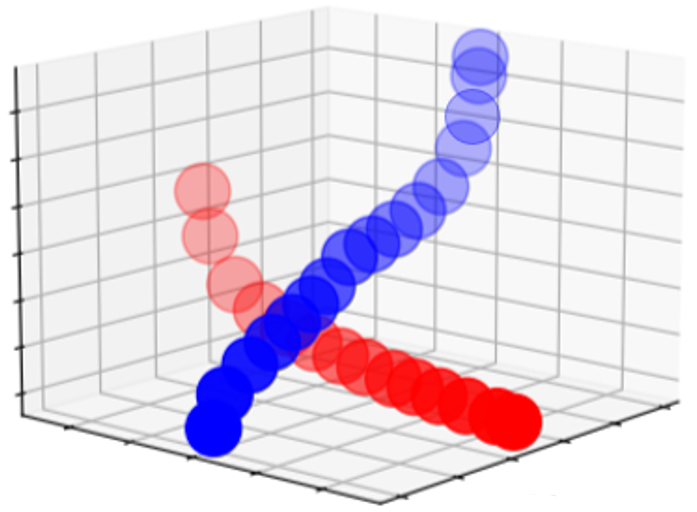}}
    \caption{\edit{STG representation of a} simple scenario involving two agents initially positioned perpendicular to each other, each with a goal directly across the field. The conflict is highlighted in yellow. Because the red path is given higher priority, the conflict search finds that shifting the blue path yields the best solution.}
    \label{fig:example}
    \vspace{-14pt}
\end{figure*}

\section{Space-Time Conflict Spheres}\label{stcs}

\subsection{Overview}
The objective of STCS is to resolve conflicts upon formation while attempting to minimize the total displacement of spheres during this process.
Simultaneously, the algorithm must ensure that connectivity \labelcref{eq:con2} and compliance with kinematic constraints \labelcref{eq:con3} is maintained within each path.
Resolving one intersection may lead to the formation of many others, making this a difficult problem.

We begin by approaching the sub-problem of computing the minimum displacement required to resolve an intersection between a single pair of spheres, and we solve this by introducing the idea of \textit{displacement vectors} (DVs).
Then, we move to considering the effect of a sphere's translation on its individual path, namely through a \textit{path shift}, which first deforms a path around its displaced sphere then applies a smoothing operation to maintain connectivity and adherence to temporal and kinematic constraints.
Finally, we employ the two above concepts and formulate an efficient search procedure for collecting complete solutions to the current conflict using a depth-first paradigm; we refer to this process as \textit{conflict search}.
A visual overview of the process is given in \Cref{fig:example}.
Each of the following three subsections details one of the aforementioned aspects of STCS.
\vspace{-3pt}

\subsection{Displacement Vectors}
We define an \textit{outstanding} sphere as one in the STG with the potential to be involved in an intersection with another sphere of a different path.
We denote the set of all outstanding spheres in a solution as $\mathcal{S}$.
Each sphere $s^o_i \in \mathcal{S}$ has a single corresponding DV $\vec{v}_i \in \mathcal{V}$, where $\mathcal{V}$ is the set of all DVs.
We compute vectors for each of the outstanding spheres such that applying $\vec{v}_i$ to $s^o_i$ yields an intersection-free STG.

The translation caused by DVs for a pair of intersecting spheres can be visualized as a repulsive force acting between two charged particles.
In general, the DV $\vec{v}_i$ of outstanding sphere $s^o_i$ to resolve intersection with sphere $s^o_j$ is given by 
$\vec{v}_i = \left(2r/\norm{s^o_i - s^o_j}_2 - 1\right) (s^o_i - s^o_j)$.
The magnitude of $\vec{v}_i$ is the minimum displacement of $s^o_i$ necessary to resolve the intersection, assuming $s^o_j$ is stationary.
The direction of $\vec{v}_i$ is orthogonal to the plane of intersection between $s^o_i$ and $s^o_j$ (see \Cref{fig:dvspheres}).
In the case where two paths advance straight towards one another, we can introduce a small bias in the angle of each DV in the conflicting sphere pair to ensure the paths can navigate around each other.
Note that the first sphere in any path is immutable, and the final sphere in any path that has converged to its goal is immutable spatially.

\subsection{Path Shifts}
Path shifts are first simulated during the conflict search stage, then finally applied post-optimization. 
The displacement of any outstanding sphere along its DV will cause a shift within that sphere's path, centered around it. 
This can be intuitively visualized as a rubber rod deforming after being hit by a ball (see \Cref{fig:pathshift}). 
For a path $\pi_i$ that contains an outstanding sphere $s^o_j$ with DV $\vec{v}_j$, for each sphere $s_{i,k} \in \pi_i$, we define the coefficient $\mu_{i,k} = \exp(-\gamma_i(d_{i,k}/d_i^{max})^2),$
where $d_{i,k}$ is the distance between $s_{i,k}$ and the outstanding sphere $s^o_j$, $d_i^{max}$ is the maximum distance between any point along $\pi_i$ and $s^o_j$ (see \Cref{fig:shift_a}).
Furthermore, $\gamma_i$ is a positive constant assigning the path rigidity of $\pi_i$: larger $\gamma_i$ localizes the effects of the collision around the outstanding sphere, while smaller values resonate the effects throughout the path. 
Then, each $s_{i,k}$ is accordingly translated along a path shift vector $\vec{\psi}_{i,k} = \mu_{i,k} \vec{v}_j$.
By adapting basic kinematics equations and solving for time, the minimum timestep $(\delta t)_{i,k}$ required for an agent $\alpha_i$ to traverse between points $s_{i,k}$ and $s_{i,k+1}$ on its path is
\begin{equation}\label{eq:deltat}
    (\delta t)_{i,k} = \frac{-\norm{\vec{\omega}^{\sigma}_{i,k}}_2 + \sqrt{\norm{\vec{\omega}^{\sigma}_{i,k}}_2^2 + 2a^{max}_i \norm{\vec{\sigma}_{i,k}}_2}}{a^{max}_i}
\end{equation}
where $\vec{\sigma}_{i,k}$ denotes the projection of $(s_{i,k+1} - s_{i,k})$ onto $\{\hat{x}, \hat{y}\}$, i.e. the spatial displacement between $s_{i,k}$ and $s_{i,k+1}$, $\vec{\omega}^{\sigma}_{i,k}$ is the projection of agent $\alpha_i$'s velocity vector $\vec{\omega}_{i,k}$ at $s_{i,k}$ onto $\vec{\sigma}_{i,k}$, i.e. the velocity of $\alpha_i$ along $\pi_i$ at point $s_{i,k}$, and $a^{max}_i$ is the agent's acceleration bound. 

After applying $\vec{\psi}_{i,k}$, the following path-smoothing operation is executed by iterating forward through the current agent path $\pi_i$, which computes a velocity profile and removes all kinematic constraint violations by $\vec{\omega}_{i,k} := \vec{\omega}_{i,k-1} + a^{max}_i (\delta t)_{i,k} \hat{\sigma}_{i,k}$ and $t_{i,k} := \max(t_{i,k}, t_{i,k-1} + (\delta t)_{i,k})$.
This smoothing strategy pushes the trajectory to its kinematic limits by maximizing velocity while ensuring agreement with kinematic constraints.

For scenarios in which agent paths are spatially constrained along a general direction, e.g. lane markings at a traffic intersection, it is simple to introduce a requirement that makes certain spheres along the trajectory immutable spatially.

\begin{figure}[t]
    \hspace{0.1cm}
    \begin{minipage}[b]{0.4\linewidth}
        \centering
        \includegraphics[width=\textwidth]{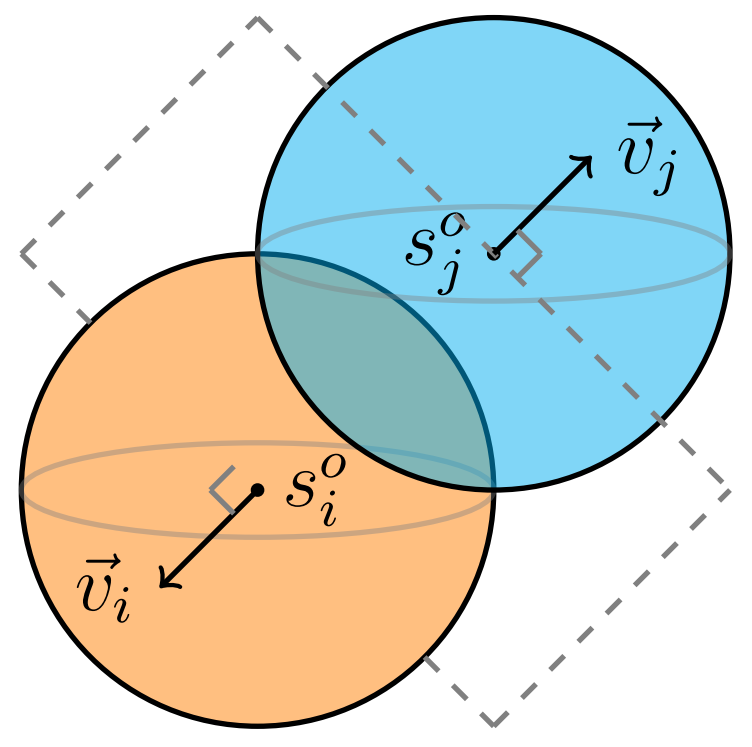}
        \caption{Computed DVs for a pair of intersecting spheres.}
        \label{fig:dvspheres}
    \end{minipage}\vspace{-12pt}
    \hspace{1cm}
    \begin{minipage}[b]{0.4\linewidth}
        \centering
        \subfloat[Pre-shift]{\label{fig:shift_a}\includegraphics[width=\textwidth]{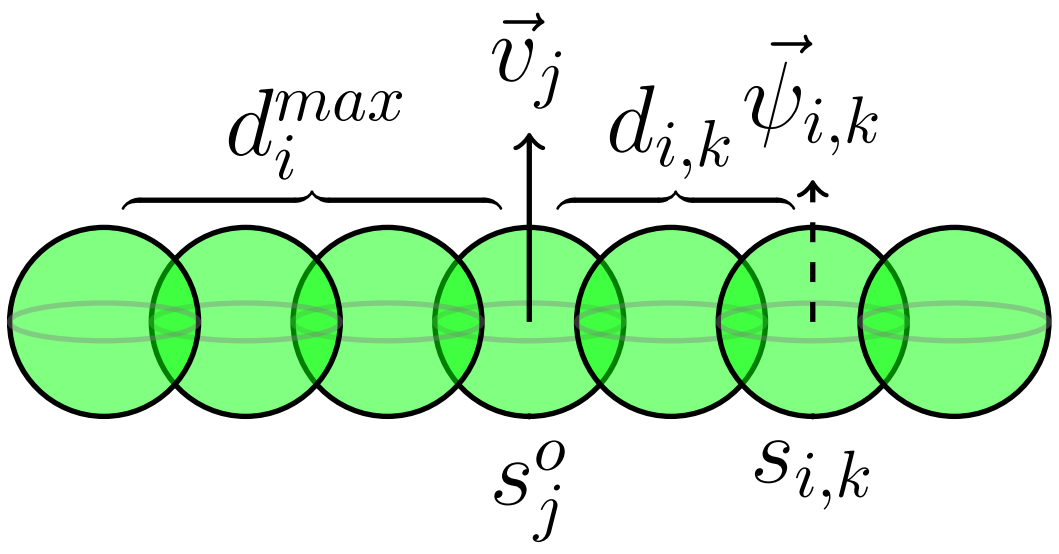}}

        \subfloat[Post-shift] {\label{fig:shift_b}\includegraphics[width=\textwidth]{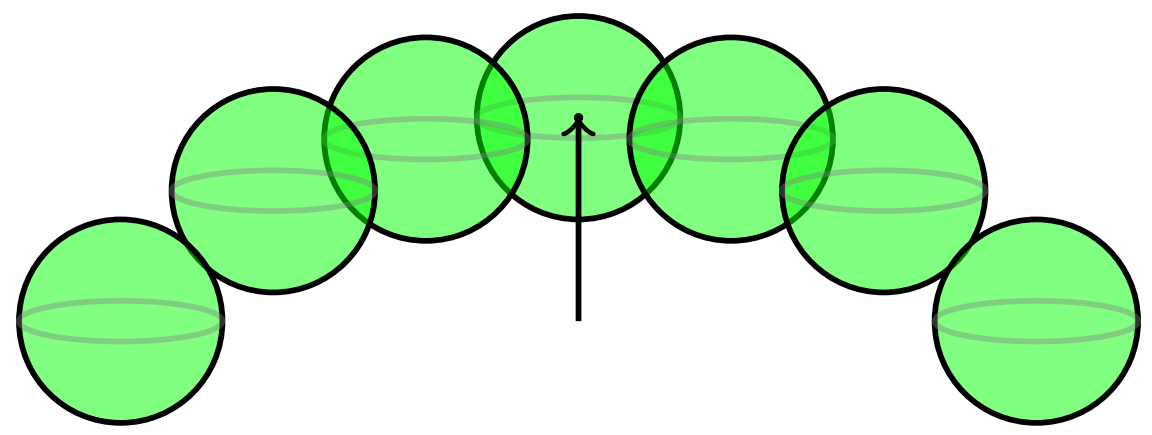}}
        \caption{A path shift centered around $s^o_j$ along $\vec{v}_j$.}
        \label{fig:pathshift}
    \end{minipage}
\end{figure}
\vspace{-6pt}

\subsection{Conflict Search}
We can utilize a three-dimensional range querying data structure (e.g. $k$-d tree) to efficiently query pairs of intersecting spheres within the STG during each iteration of conflict resolution.
To determine the set of outstanding spheres $\mathcal{S}$ and their respective DVs $\mathcal{V}$, we employ a recursive, depth-first conflict search. 
We define \textit{calling} a sphere as translating the sphere across some specified DV within some current STG state $\mathcal{T}_{cur}$, applying a path shift, querying further intersections, computing the DV of each sphere involved in an intersection, and finally calling each of these involved spheres with their respective DVs and the new STG state $\mathcal{T}_{new}$. 
There are two base cases for this recursion when a sphere is called: if the sphere has already been visited in the current recursion sequence, it returns false (infeasible), and if no more intersections arise following the sphere's translation and path shift, it returns true (feasible). 

By recursively decomposing each intersection into two cases (e.g. $s_1$ moves vs. $s_2$ moves), a set of solutions $\mathcal{L}$ is accumulated, where each solution $\mathcal{L}_i$ is an object containing a particular sequence $\{\mathcal{S}_i, \mathcal{V}_i\}$ that resolves all conflict.
Multiple solutions are obtainable since each intersection can be decomposed into two cases (e.g. $s_1$ moves vs. $s_2$ moves).
In the case of an agent-obstacle sphere intersection, only the agent sphere can be called.
In the case of a chain reaction of intersections (e.g. $s_1$ intersects $s_3$ after resolving intersection with $s_2$), the requirement that visited spheres cannot be called implies that only one new sphere will be called.
\edit{
Once all solutions have been collected, the best can be selected by minimizing the objective function $\sum_{i=1}^{n} \phi_i \norm{\vec{v}_i}_2$, where $\phi_i$ is the priority value of the path that $s^o_i$ belongs to. 
Note that this function is a refinement of \labelcref{eq:obj} that only allows the manipulation of outstanding spheres, and restricts the movement of these spheres to the magnitude and direction of their respective DVs.
}
This formulation enables parallelized path prioritization since trajectories can be planned simultaneously while still implicitly favoring high-priority agents during conflict.
The above procedure is summarized in \Cref{alg:conflictsearch}.
We will show in \Cref{complete-theorem} that this algorithm is complete with respect to the formulation space $\mathcal{F}$ of the conflict (see \Cref{formulation-def}).

\begin{algorithm}[t]
\caption{Conflict search}
\label{alg:conflictsearch}
    \begin{algorithmic}[1]
    \Statex
        \Function{Resolve}{$\mathcal{T}_{cur}, s_{cur}, \vec{v}_{cur}, vis$}
            \If{$vis[s_{cur}]$}
                \State \Return $[\hspace{1mm}]$ \Comment{Infeasible, already visited}
            \EndIf
            \State $vis[s_{cur}] \gets \text{True}$, $\mathcal{L}_{cur} \gets [\hspace{1mm}]$ 
            \State $\mathcal{T}_{new} \gets$ \Call{PathShift}{$\mathcal{T}_{cur}, s_{cur}, \vec{v}_{cur}$}
            \State $query \gets$ \Call{QueryPairs}{$\mathcal{T}_{new}$}
            \State $feasible \gets \text{False}$
            \ForAll{$(s_{i,k}, s_{j,l})$ in $query$}
                \State $\vec{v}_{i,k} \gets$ \Call{ComputeDV}{$\mathcal{T}_{new}[s_{i,k}], \mathcal{T}_{new}[s_{j,l}]$}
                \State $\vec{v}_{j,l} \gets$ \Call{ComputeDV}{$\mathcal{T}_{new}[s_{j,l}], \mathcal{T}_{new}[s_{i,k}]$}
                \State $\mathcal{L}_{i,k} \gets$ \Call{Resolve}{$T_{new}, s_{i,k}, \vec{v}_{i,k}, vis$}
                \State $\mathcal{L}_{j,l} \gets$ \Call{Resolve}{$T_{new}, s_{j,l}, \vec{v}_{j,l}, vis$}
                \State $\mathcal{L}_{cur} \mathrel{+}= \mathcal{L}_{i,k} + \mathcal{L}_{j,l}$
                \If{$(\mathcal{L}_{i,k} + \mathcal{L}_{j,l})$ not empty}
                    \State $feasible \gets \text{True}$
                \EndIf
            \EndFor
            \If{not $feasible$ and $query$ not empty}
                \State \Return $[\hspace{1mm}]$ \Comment{Infeasible, unresolved conflict}
            \EndIf
            \If{$\mathcal{L}_{cur}$ empty}
                \State \Call{Push}{$\mathcal{L}_{cur}, [\hspace{1mm}]$} \Comment{Feasible, end of solution}
            \EndIf
            \ForAll{$sol$ in $\mathcal{L}_{cur}$}
                \State \Call{Push}{$sol, (s_{cur}, \vec{v}_{cur})$} \Comment{Feasible, build solutions}
            \EndFor
        \State \Return $\mathcal{L}_{cur}$ \Comment{All solutions}
        \EndFunction
    \end{algorithmic}
\end{algorithm}

\section{Theoretical Properties}\label{theoretical}

\begin{definition}[feasibility]
    We refer to a MAMP solution as \textbf{feasible} if the computed path configuration is conflict-free in the continuous time domain.
\end{definition}

\begin{theorem}[continuous-time feasibility]\label{feasible-theorem} If there exists a solution to the discrete-time problem \labelcref{eq:formulation} when $r$ is scaled by $\lambda^* = \frac{1}{\sqrt{3} - 1}$, then the solution is feasible.
\end{theorem}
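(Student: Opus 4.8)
The plan is to treat the continuous trajectory as the straight-line interpolation of consecutive waypoints inside the space-time grid, and to show that under the $\lambda^*$-scaled constraints no interpolated point can approach another path's sphere closely enough to create a conflict that the discrete check did not already forbid. First I would reduce the global claim to a single edge against a single sphere: if every point on an arbitrary edge $[s_{i,k}, s_{i,k+1}]$ of one path keeps the required clearance from an arbitrary sphere $s_{j,l}$ of a different path, then quantifying over all edges and all spheres gives feasibility of the whole configuration. It therefore suffices to lower-bound, over the segment, the space-time distance from an interpolated point $P$ to the center $s_{j,l}$.

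The heart of the argument --- and the source of the constant $\frac{1}{\sqrt{3}-1}$ --- is a planar (here space-time) lemma. On the chosen edge, connectivity \labelcref{eq:con2} gives $\norm{s_{i,k}-s_{i,k+1}}_2 \le 2\lambda r$, while separation \labelcref{eq:con1} gives $\norm{s_{i,k}-s_{j,l}}_2 \ge 2\lambda r$ and $\norm{s_{i,k+1}-s_{j,l}}_2 \ge 2\lambda r$. Writing the two endpoint-to-$s_{j,l}$ distances in terms of the perpendicular distance $h$ from $s_{j,l}$ to the line through the edge and the offsets $e_1,e_2$ of the foot of that perpendicular (with $e_1+e_2 = L \le 2\lambda r$), each separation constraint reads $h^2 + e_m^2 \ge 4\lambda^2 r^2$, so $h^2 \ge 4\lambda^2 r^2 - \min(e_1,e_2)^2 \ge 4\lambda^2 r^2 - (L/2)^2 \ge 3\lambda^2 r^2$. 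Hence every interpolated point satisfies $\norm{P - s_{j,l}}_2 \ge \sqrt{3}\,\lambda r$, with equality attained by a maximal-length edge whose midpoint is the foot of the perpendicular from an $s_{j,l}$ sitting exactly at the separation threshold --- which is precisely where the $\sqrt{3}$ enters.

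To finish I would impose the continuous-time conflict criterion as the analogue of the discrete sphere-intersection condition \labelcref{eq:con1}: the physical agent of radius $r$ located at $P$ must not penetrate the radius-$\lambda r$ sphere $s_{j,l}$, i.e.\ $\norm{P-s_{j,l}}_2 \ge (\lambda+1)r$. Combining this with the lemma, it is enough that $\sqrt{3}\,\lambda r \ge (\lambda+1)r$, i.e.\ $\lambda(\sqrt{3}-1)\ge 1$, which is exactly $\lambda \ge \lambda^* = \frac{1}{\sqrt{3}-1}$. Thus, once the constraints are enforced with $r$ scaled to $\lambda^* r$, the guaranteed clearance $\sqrt{3}\,\lambda^* r$ meets the threshold, every interpolated position is conflict-free, and the discrete solution is feasible in continuous time.

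The step I expect to be the main obstacle is making the worst-case configuration rigorous rather than merely plausible: I must show the midpoint-of-a-maximal-edge placement genuinely minimizes $\norm{P - s_{j,l}}_2$ over all admissible edges and sphere positions --- the two-sided Pythagorean inequality above is meant to do exactly this, ruling out that an off-center foot or a shorter edge could beat $\sqrt{3}\,\lambda r$ --- and I must confirm the reduction to a single edge--sphere pair is lossless once the time coordinate is included, in particular that the interpolation is a valid, time-monotone trajectory consistent with \labelcref{eq:con3}, and that chains of spheres and dynamic-obstacle spheres are governed by the same bound. The endpoint/vertex cases, where the foot of the perpendicular falls outside the segment and the distance is instead bounded below by $2\lambda r > \sqrt{3}\,\lambda r$, are routine but should be recorded for completeness.
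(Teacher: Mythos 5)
Your proposal is correct and takes essentially the same route as the paper: the paper's single assertion that a sphere clearing both endpoints of a tangent-length edge lies at perpendicular distance $\sqrt{3}\lambda r$ from it, so that tangency with the physical capsule forces $\sqrt{3}\lambda = \lambda + 1$ and hence $\lambda^* = \frac{1}{\sqrt{3}-1}$, is exactly your two-sided Pythagorean lemma in the dual framing (you interpolate an edge of path $i$ against a sphere of path $j$, while the paper tests a sphere of path $i$ against the capsule of an edge of path $j$). Your write-up simply supplies the worst-case optimization and the off-segment vertex cases that the paper compresses into ``by geometry,'' and the residual concerns you flag (both paths simultaneously between waypoints, time-monotone interpolation) are left equally unaddressed by the paper's own proof.
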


\begin{proof}
The chain-of-spheres path representation is discrete, and thus discretization error is inherent. 
The error occurs if an intersection exists in the capsule representation, but not in the chain-of-spheres representation. 
Suppose we have a sphere $s_{i,k}$ from path $\pi_i$, and two adjacent spheres $s_{j,l}$ and $s_{j,l+1}$ from a second path $p_j$. 
Here, discretization error would occur if $s_{i,k}$ intersects neither $s_{j,l}$ nor $s_{j,l+1}$, but it is still within the bounding capsule $c$ between $s_{j,l}$ and $s_{j,l+1}$. 
To resolve this violation, we can scale the radius of all spheres in the space by some constant $\lambda > 1$ during conflict resolution. 
By geometry, $s_{i,k}$ is exactly tangential to $c$ when $\lambda^* = \frac{1}{\sqrt{3} - 1}$. 
\end{proof}

\begin{definition}[formulation space]\label{formulation-def}
We define the \textbf{formulation space} $\mathcal{F}$ of a MAMP conflict as the set of all possible path configurations that can be reached from some initial state by executing some sequence $\mathcal{V}$ of DV translations, given the formulation of the DV computation and path shift operations.
\end{definition}

\begin{theorem}[formulation-space completeness]\label{complete-theorem}
 If there exists a solution to \labelcref{eq:formulation} that also exists in $\mathcal{F}$, then STCS will find and return a feasible solution.
\end{theorem}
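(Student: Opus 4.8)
The plan is to recast Algorithm~\ref{alg:conflictsearch} as an exhaustive depth-first enumeration of exactly the configurations that constitute $\mathcal{F}$, so that any conflict-free configuration reachable by DV translations surfaces as one of the returned solutions. First I would make the underlying search tree explicit. Fix the initial conflicting STG state as the root. Because \textsc{ComputeDV} is a deterministic function of an ordered intersecting pair and \textsc{PathShift} is a deterministic function of a sphere, a DV, and the current state, each binary decision of \emph{which} of the two spheres in a detected intersection is translated determines a unique successor state. Hence the recursion induces a rooted tree whose edges are precisely single DV translations and whose nodes are the states reachable by finite sequences $\mathcal{V}$ of such translations. By \Cref{formulation-def} this node set is exactly $\mathcal{F}$, and a solution lying in $\mathcal{F}$ is, by definition, a node at which \textsc{QueryPairs} returns no pair, i.e. a conflict-free configuration satisfying \labelcref{eq:con1}--\labelcref{eq:con3}.

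Next I would show by induction on the height of the subtree rooted at a called sphere that the search returns every conflict-free node reachable \emph{without repeating a sphere along the current branch}. The base cases are immediate: a node with an empty query is pushed as a terminal feasible solution, and a call to an already-visited sphere returns the empty list. For the inductive step, the \textbf{for}-loop ranges over all outstanding pairs and recurses on both resolution branches of each, concatenating the returned partial solutions and prepending $(s_{cur}, \vec{v}_{cur})$; thus a feasible descendant appears as a completed entry of $\mathcal{L}_{cur}$ if and only if some child branch is feasible, which mirrors the recursive structure of the tree exactly. Termination follows because each branch calls any sphere at most once and there are finitely many spheres, so every root-to-leaf path has bounded length and the explored tree is finite.

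The hard part, and the step I expect to be the main obstacle, is reconciling the sphere-level $vis$ pruning with the unrestricted sequences admitted by \Cref{formulation-def}: a priori the algorithm explores only branches that are acyclic in the spheres they translate, whereas $\mathcal{F}$ in principle contains configurations reached by sequences that re-call a sphere. I would close this gap with a minimality argument. Suppose a conflict-free configuration is reachable in $\mathcal{F}$ and take a reaching DV sequence of minimum length; I claim it never calls the same sphere twice on one branch, since the repulsive nature of the DV computation means a revisited sphere is pushed back along the direction it just came from, so the subsequence between the two calls makes no net progress and can be excised to yield a shorter reaching sequence, contradicting minimality. The minimal sequence is therefore acyclic in its spheres and lies inside the subtree the $vis$-restricted DFS actually traverses, so at least one conflict-free leaf is collected in $\mathcal{L}$. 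Establishing this excision rigorously, rather than by the repulsion intuition, is the delicate point, because \textsc{PathShift} is state-dependent and removing intermediate translations alters the states in which later DVs are computed; I would handle it by reasoning about full-state revisits where the loop excision is unconditionally valid, and arguing the sphere-level restriction never eliminates the last remaining solution.

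Finally, selecting the returned solution that minimizes $\sum_{i} \phi_i \norm{\vec{v}_i}_2$ yields a configuration satisfying \labelcref{eq:con1}--\labelcref{eq:con3}; since the radii were scaled by $\lambda^*$, \Cref{feasible-theorem} guarantees this discrete solution is conflict-free in continuous time, so STCS returns a feasible solution and the proof is complete.
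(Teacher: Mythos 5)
Your proposal follows the same overall skeleton as the paper's proof --- argue that the depth-first recursion exhaustively enumerates the reachable configurations, observe that translations and path shifts are applied in the same order in which they are computed so that a returned DV sequence really does produce an intersection-free STG, and then invoke \Cref{feasible-theorem} to lift the discrete solution to continuous time --- but you develop it in considerably more depth. The paper's argument consists essentially of your first and last steps: it asserts that ``the depth-first search performs a complete search of the solution space,'' states an inductive prefix property (if $\vec{v}_i$ leads to a feasible solution under a given continuation, so does $\vec{v}_{i-1}$), and concludes via \Cref{feasible-theorem}. What you add, and what the paper does not address at all, is the mismatch between the sphere-level $vis$ pruning in Algorithm~\ref{alg:conflictsearch} and the unrestricted sequences admitted by \Cref{formulation-def}: the DFS only explores branches that never re-call a sphere, whereas $\mathcal{F}$ as defined contains configurations reached by sequences that do. Identifying this as the crux is the most valuable part of your write-up, since without closing it the claim that the search is ``complete over $\mathcal{F}$'' does not follow.

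That said, your proposed repair --- take a minimum-length reaching sequence and excise the loop between two calls to the same sphere --- is not yet a proof, and you correctly diagnose why: \textsc{PathShift} is state-dependent, so deleting intermediate translations changes the states in which all subsequent DVs are computed, and the truncated sequence need not remain a valid sequence of \emph{computed} DVs at all (the intersections that triggered those DVs may no longer exist, or new ones may appear). Reasoning about full-state revisits does not obviously rescue this, because a repeated sphere does not imply a repeated state. So your argument, like the paper's, ultimately proves completeness only over the sphere-acyclic portion of $\mathcal{F}$; the honest fix is either to strengthen \Cref{formulation-def} to restrict $\mathcal{F}$ to configurations reachable by sequences that call each sphere at most once (which makes the theorem match what the algorithm actually searches), or to supply the missing lemma that every configuration in $\mathcal{F}$ is so reachable. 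As written, your proposal is a more rigorous and more self-aware version of the paper's argument, with the same unresolved step clearly flagged rather than silently assumed.
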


\begin{proof}
By nature, the depth-first search performs a complete search of the solution space.
Generally, we can state that if $\vec{v}_i$ will yield a feasible solution following some additional sequence of translations $\{\vec{v}_{i+1}, \vec{v}_{i+2}, ..., \vec{v}_{n-1}, \vec{v}_{n}\}$, then $\vec{v}_{i-1}$ will also yield that same feasible solution, given $\vec{v}_i$ and the same additional sequence.
Because DV translations and subsequent path shifts are applied in the same order in which they are computed and path shifts are simulated during the conflict search stage, a solution in the form of a DV sequence must yield an intersection-free STG.
By \Cref{feasible-theorem}, an intersection-free STG implies a feasible MAMP solution.
\end{proof}

\section{Experimental Results}\label{results}
We first evaluate STCS under motion planning tasks, and then verify the planned trajectories in realistic traffic simulation.
We compare our algorithm to a baseline as well.

\subsection{Simulation Setup}
We simulate motion planning tasks where $N$ agents are given starting and goal locations and must cooperatively navigate through an environment with $M$ static and dynamic obstacles, for $N \in [2, 4]$ and $M \in [0, 5]$.
We implement STCS in Python to configure and solve each motion planning instance.
The planned trajectories from each scenario are then executed in a traffic intersection using CARLA, an open-source autonomous driving simulator \cite{30}.
During CARLA evaluation, we verify the feasibility of solutions through adherence to car dynamics for control and spatial constraint of agent motion to the dimensions of the intersection.
In all experiments, the field size is $20 \si{\meter} \times 20 \si{\meter}$, and for all agents, we assign the radius $r = 3.5 \si{\meter}$, the acceleration bound $a^{max}_i = \pm 3 \si[per-mode=symbol]{\meter\per\second\squared}$, and the path rigidity $\gamma_i = 10$. 
We assign the priority $\phi_i=100$ for Agent $1$ and $\phi_i=1$ for all other agents.
For evaluation, we configure various scenarios \edit{(see \Cref{fig:sim_setup})} in four categories: \textbf{obstacle-free} (F1-F3), \textbf{static obstacles} (S1-S3), \textbf{dynamic obstacles} (D1-D3) and \textbf{non-connected vehicles} (N1-N3).
The non-connected vehicle case is a special case of dynamic obstacles that is particularly applicable to CAVs.

\begin{figure}[t]
    \centering
    \includegraphics[width=\linewidth]{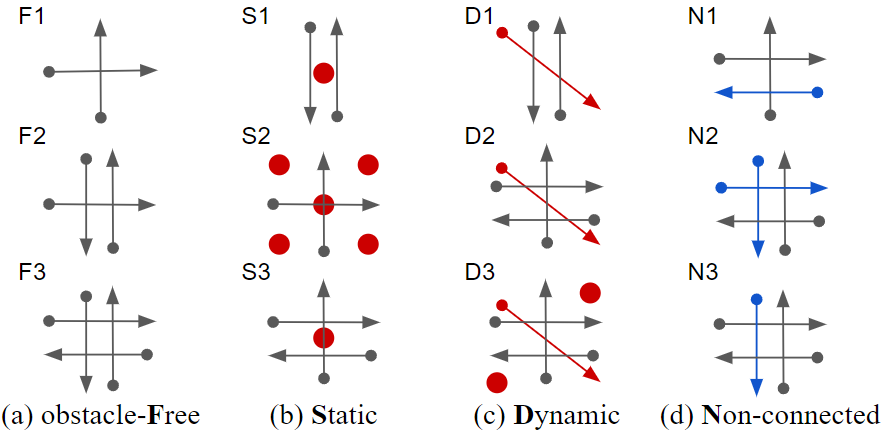}
    \caption{Simulated motion planning tasks. Agents are depicted in black arrows. Obstacles are depicted in red dots (static) or arrows (dynamic). Blue arrows represent non-connected vehicles.}
    \label{fig:sim_setup}
    \vspace{-15pt}
\end{figure}

\subsection{Baseline and Evaluation Metric}
To assess solution quality, we compute \textit{suboptimality ratios} for various metrics, which we define as the ratio between the observed value of the metric and a lower bound on its optimal value.
In particular, we measure total distance, as the sum of the distances traveled by all agents in the solution, and makespan, which is the time that the last agent reaches its goal.
We obtain a lower bound on the optimal total distance of a solution using the sum of L2 norms between each agent's starting point and its goal. 
We obtain a lower bound on the optimal makespan of a solution by solving for time using basic kinematics formulas, assuming that traveled distance is the L2 norm between an agent's starting point and its goal and that the agent has a constant acceleration $a^{max}_i$. 
We finally compute the \textit{overall} suboptimality ratio as the average ratio over all metrics.
The overall suboptimality ratio is at least $1$, the lower the better.
A ratio of $1$ is often unattainable in practice.
We compare the performance of STCS to that of S2M2, a MAMP algorithm proposed in \cite{25} which is incompatible with dynamic obstacles. 
We measure each algorithm's runtime in each scenario as the average runtime over $20$ trials.
All experiments were run on a desktop computer with an AMD Ryzen 5 2600X CPU and 16GB RAM.

\subsection{Motion Planning Evaluation}
We evaluate STCS in motion planning with respect to runtime and suboptimality ratio, compared to S2M2 in obstacle-free and static-obstacle scenarios.
\Cref{fig:runtime_f} indicates that STCS and S2M2 provide similar runtime in obstacle-free scenarios.
\Cref{fig:subopt_f} indicates that the algorithms also exhibit comparable solution quality in these scenarios, as implied by the overall suboptimality ratio metric.
Likewise, \Cref{fig:runtime_s} and \Cref{fig:subopt_s} extend these observations to scenarios involving static obstacles.
\Cref{tab:dynamic} and \Cref{tab:nonconnected} demonstrate that the performance of STCS with respect to both runtime and solution quality scales well to scenarios involving dynamic obstacles and non-connected vehicles.
Thus, STCS matches state-of-the-art performance with respect to both runtime and solution quality in obstacle-free and static-obstacle cases and extends this performance to settings involving dynamic obstacles.

\begin{figure}[t]
    \centering
    \begin{minipage}{.475\columnwidth}
        \includegraphics[width=\linewidth]{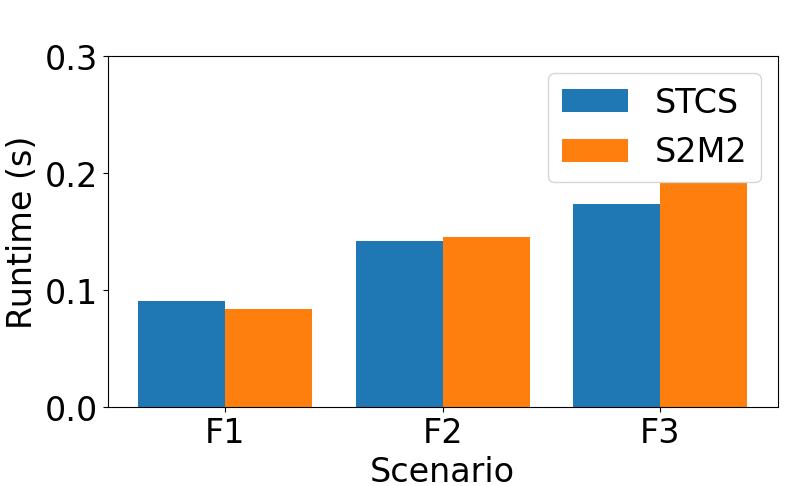}
        \caption{Avg runtime.}
        \label{fig:runtime_f}
    \end{minipage}\vspace{-14pt}\hfill
    \begin{minipage}{.475\columnwidth}
        \includegraphics[width=\linewidth]{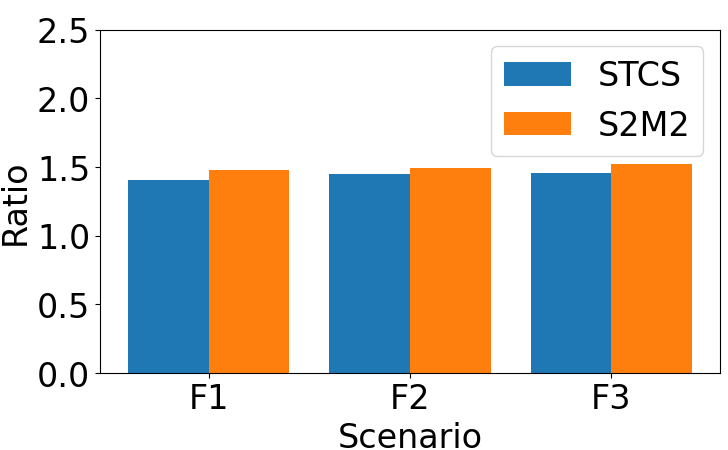}
        \caption{Overall suboptimality ratio.}
        \label{fig:subopt_f}
    \end{minipage}
\end{figure}

\begin{figure}[t]
    \centering
    \begin{minipage}{.475\columnwidth}
        \includegraphics[width=\linewidth]{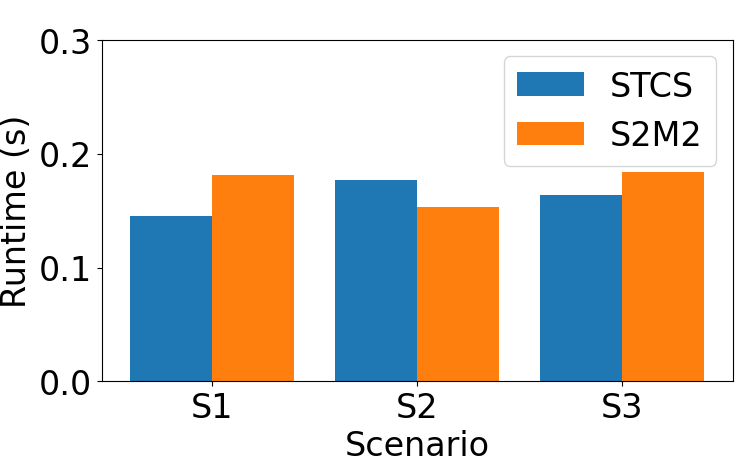}
        \caption{Avg runtime.}
        \label{fig:runtime_s}
    \end{minipage}\vspace{-10pt}\hfill
    \begin{minipage}{.475\columnwidth}
        \includegraphics[width=\linewidth]{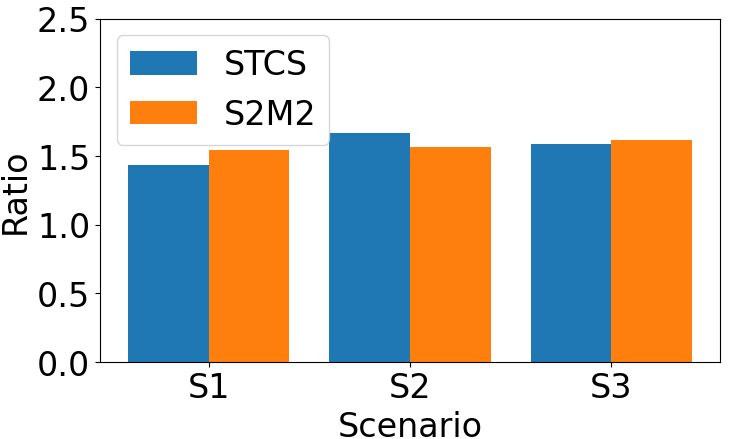}
        \caption{Overall suboptimality ratio.}
        \label{fig:subopt_s}
    \end{minipage}
    \vspace{-10pt}
\end{figure}

\begin{table}[t]
\begin{center}
    \begin{tabular}{p{3.25cm}P{1.25cm}P{1.25cm}P{1.25cm}}
        \toprule
        & \textbf{D1} & \textbf{D2} & \textbf{D3} \\
        \midrule
        \textbf{Runtime} (\si{\second}) & 0.231 & 0.246 & 0.212 \\
        \textbf{Distance Suboptimality} & 1.059 & 1.094 & 1.128 \\
        \textbf{Makespan Suboptimality} & 1.825 & 2.424 & 2.324 \\
        \textbf{Overall Suboptimality} & 1.442 & 1.759 & 1.726 \\
        \bottomrule
    \end{tabular}
    \caption{Evaluation of STCS in dynamic obstacle scenarios.}
    \label{tab:dynamic}
\end{center}
\vspace{-10pt}
\end{table}  

\begin{table}[t]
\begin{center}
    \begin{tabular}{p{3.25cm}P{1.25cm}P{1.25cm}P{1.25cm}}
        \toprule
        & \textbf{N1} & \textbf{N2} & \textbf{N3} \\
        \midrule
        \textbf{Runtime} (\si{\second}) & 0.155 & 0.224 & 0.237 \\
        \textbf{Distance Suboptimality} & 1.561 & 1.146 & 1.785 \\
        \textbf{Makespan Suboptimality} & 1.657 & 2.281 & 1.848 \\
        \textbf{Overall Suboptimality} & 1.609 & 1.713 & 1.817 \\
        \bottomrule
    \end{tabular}
    \caption{Evaluation of STCS in non-connected vehicle scenarios.}
    \label{tab:nonconnected}
\end{center}
\vspace{-17pt}
\end{table}  

\subsection{CARLA Evaluation}
We further evaluate STCS in traffic intersections using the CARLA autonomous vehicle simulator. 
We use a four-way, two-lane uncontrolled intersection setting.
We assign cars for each agent and non-connected vehicle, and bicycles and pedestrians for each static or dynamic obstacle.
Furthermore, we constrain vehicles the bounds of the intersection. 
In each scenario, we test if STCS and S2M2 produce a solution that can be feasibly executed in CARLA.
A full solution generated by STCS for Scenario \textbf{N2} is shown in \Cref{fig:carla_N2}.

As indicated by \Cref{tab:carla}, the MAMP solutions produced by STCS were feasible for all types of scenarios.
On the other hand, S2M2 occasionally failed given the spatial constraints, specifically in the more challenging \textbf{F3} and \textbf{S3} cases.
Thus, we experimentally demonstrate that STCS expands upon the current state of the art by solving complex traffic intersection scenarios involving pedestrians and non-connected vehicles, and by maintaining continuous-time feasibility and formulation-space completeness in all cases.
In particular, we show that the latter property enables solutions to be found more consistently in challenging scenarios with spatial constraints.

\begin{table}[t]
    \begin{minipage}{.525\columnwidth}
        \begin{center}
        \begin{tabular}{ P{1.2cm}P{1.2cm}P{1.2cm} }
            \toprule
            & \textbf{STCS} & \textbf{S2M2} \\
            \midrule
            \textbf{F1} & \cmark & \cmark \\
            \textbf{F2} & \cmark & \cmark \\
            \textbf{F3} & \cmark & \xmark \\
            \midrule
            \textbf{S1} & \cmark & \cmark \\
            \textbf{S2} & \cmark & \cmark \\
            \textbf{S3} & \cmark & \xmark \\
            \bottomrule
        \end{tabular}
        \end{center}
    \end{minipage}\hfill
    \begin{minipage}{.425\columnwidth}
        \begin{center}
        \begin{tabular}{ P{1.2cm}P{1.2cm} }
            \toprule
            & \textbf{STCS} \\
            \midrule
            \textbf{D1} & \cmark\\
            \textbf{D2} & \cmark\\
            \textbf{D3} & \cmark\\
            \midrule
            \textbf{N1} & \cmark\\
            \textbf{N2} & \cmark\\
            \textbf{N3} & \cmark\\
            \bottomrule
        \end{tabular}
        \end{center}
    \end{minipage}
\caption{Feasibility of solutions by STCS and S2M2 in CARLA.}
\label{tab:carla}
\vspace{-10pt}
\end{table}

\begin{figure}[t]
    \centering
    \includegraphics[width=0.7\linewidth]{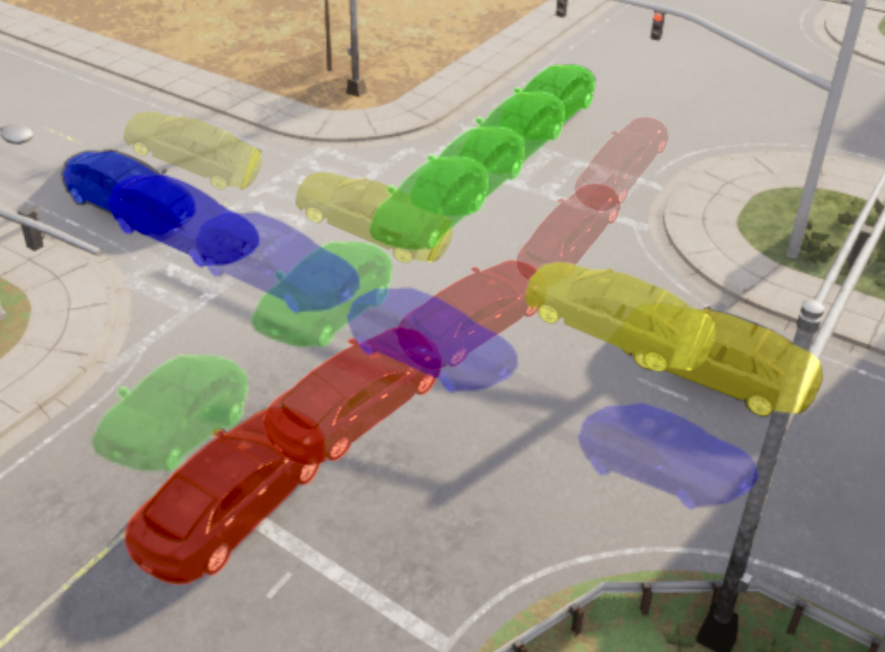}
    \caption{Trajectories generated by STCS for Scenario \textbf{N2} in CARLA, where the red and yellow belong to Agents 1 and 2, respectively, and the blue and green paths belong to non-connected vehicles.}
    \label{fig:carla_N2}
    \vspace{-12pt}
\end{figure}

\vspace{-5pt}
\section{Conclusion}\label{conclusion}
In this work we presented STCS, a novel discrete-time formulation and decoupled algorithm for multi-agent motion planning.
We theoretically proved the continuous-time feasibility and formulation-space completeness of STCS. 
We experimentally validated the algorithm's performance in various scenarios with application to unsignalized traffic intersections, demonstrating that we expand upon the current state of the art with regard to dynamic obstacle compatibility and consistency in constrained settings, while maintaining runtime and solution quality.
As STCS is a novel approach to MAMP, there still remains much work to be done.
In future work, we intend to further explore optimization techniques and formally prove suboptimality bounds and adherence to motion constraints.

\end{document}